\documentclass[letterpaper]{article}
\usepackage{aaai2027}
\usepackage[hyphens]{url}
\usepackage{graphicx}
\usepackage{natbib}
\usepackage{caption}
\usepackage{tcolorbox}
\usepackage{algorithm}
\usepackage{algorithmic}
\usepackage{booktabs}
\usepackage{amsmath,amssymb,amsthm}
\usepackage{multirow}

\newtheorem{proposition}{Proposition}

\newtheorem{assumption}{Assumption}

\title{Matching Supervision to the Student's Learning Capacity: \\
A Unified Framework for On-Policy Self-Distillation}
\author{
    Yongkang Yang\textsuperscript{1}\thanks{\raggedright
    These authors contributed equally.},
    Zhezheng Hao\textsuperscript{2}\footnotemark[1],
    Hong Zhang\textsuperscript{3},
    Yi Liu\textsuperscript{1},
    Xiankun Lin\textsuperscript{1},\\
    Wence Ji\textsuperscript{1},
    Fanjunduo Wei\textsuperscript{2},
    Jiarui Yu\textsuperscript{1},
    Qiang Lin\textsuperscript{1},
    Xiaoyun Liang\textsuperscript{1}\thanks{\raggedright
    Corresponding authors.},
    Hande Dong\textsuperscript{1}\footnotemark[2]\\[5pt]
    \normalsize Emails: \url{eoncanyang@tencent.com},
\url{donghd66@gmail.com}
}

\affiliations{
\Large
    \textsuperscript{1}Tencent, Workbuddy Team \,\,
    \textsuperscript{2}Zhejiang University \,\,
    \textsuperscript{3}Independent Researcher
}

\begin{document}

\maketitle

\begin{abstract}
On-policy self-distillation (OPSD) improves the reasoning abilities of LLMs by internalizing privileged context into model parameters through self-distillation.
Two recent research lines promote vanilla OPSD by choosing \emph{which tokens to learn from} and by controlling \emph{how much privileged information} the teacher receives, respectively.
However, we show that each line optimizes one variable while holding the other fixed, which leads to a suboptimal solution.
We argue that the two variables are coupled through the student's learning capacity: the privileged information sets the per-token divergence the teacher prescribes, while token weighting selects which of these the student must absorb.
We formalize the two lines of work into a unified optimization framework, which maximizes the aggregate teacher--student divergence, subject to a budget on the aggregate learning difficulty the student can absorb.
Under this modelling, we propose \textbf{U}nified on-policy \textbf{S}elf-\textbf{D}istillation, called \textbf{USD}, a lightweight online algorithm to solve the Lagrangian.
USD reveals that a single dual variable governs both decisions: at one price for learning difficulty, it simultaneously sets the token-selection threshold and the direction of privileged-information adjustment, keeping supervision matched to the student's evolving capacity.
Through extensive experiments, USD consistently demonstrates superior performance over OPSD and token- and PI-side baselines across various model scales on various reasoning benchmarks\footnote{Code is available at https://github.com/lauvlalala/USD}.
\end{abstract}



\section{Introduction}

Post-training has become a key technique for improving the reasoning capability of large language models~\citep{openai5gpt,gemini3, anthropic2025claude45}.
Among the emerging paradigms, on-policy self-distillation (OPSD)~\citep{zhao2026self} is particularly appealing because it requires neither curated reasoning trajectories (like SFT~\citep{yuan2023scaling,yu2023metamath}), an interactive environment (like RL~\citep{cobbe2021training,ouyang2022training,yu2026dapo}), nor an externally stronger teacher (like distillation~\citep{hinton2015distilling,kim2016sequence,gu2024minillm}); instead, it trains a model with only a piece of \emph{privileged information (PI)}---additional context that is available at training time but not at deployment, such as verified reference solutions, environment feedback, ground-truth reasoning chains, or logged experience.
Within a single model, OPSD assigns two roles: a student generates its own rollouts from the ordinary problem prompt, and a \emph{privileged teacher}---the same base model conditioned on the PI---scores each generated token to produce fine-grained supervision.
Since the student learns from the states it actually visits and the teacher shares its parameters, this signal is both distribution-consistent and free of the capacity gap that plagues cross-model distillation~\citep{agarwal2024policy,furlanello2018born}. OPSD brings substantial gains on competition math reasoning, code and agentic benchmarks~\citep{zhao2026self,tan2026paint,zhong2026sod}, exceeding reinforcement learning with a much smaller compute budget.

Despite its success, recent work reveals that the two default settings of vanilla OPSD can limit its performance: uniform token weighting and full-context PI exposure.
At the \emph{token level}, uniform weighting is inefficient: the gradient signal is concentrated in a small fraction of positions, with the top 20\% carrying roughly 80\% of the mass~\citep{wang2026not,xiao2026finding}. High-divergence positions further compound the issue, since many of them reflect \emph{incompatible} rather than learnable disagreement---teacher mass that lies outside the student's support and thus yields no useful gradient.
At the \emph{PI level}, conditioning the teacher on a piece of over-privileged context pushes its distribution too far from the student's, where the teacher--student divergence cannot be closed by a single gradient step~\citep{han2026adaptive,tan2026paint}. When the privileged context also contains the target answer explicitly, full exposure leaks it into the supervision signal. Two lines of work address these limitations respectively: \emph{token selection} methods measure each token position and re-weight the loss accordingly~\citep{wang2026not,liu2026teacher,xu2026tip,li2026filter,xie2026position}, while \emph{PI control} methods adapt how much of the privileged context is revealed at training time~\citep{tan2026paint,han2026adaptive,zhang2026beyond,shen2026purified}.

\begin{figure}[t]
\centering
\includegraphics[width=\columnwidth]{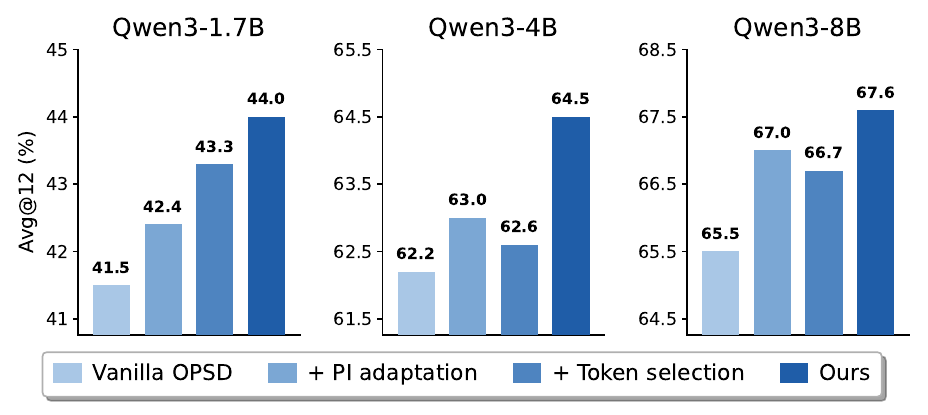}
\caption{Performance comparison (Avg@12, \%) on math reasoning tasks across three Qwen3 scales. Each single-axis variant improves over vanilla OPSD, and the joint variant is uniformly best across scales.}
\label{fig:teaser}
\end{figure}

We argue that the two variables are not independent: PI shapes the per-token teacher--student divergence, while token weighting selects which of these divergences enter the loss. Each variable's optimal setting depends on the other. Lowering PI reduces per-token learning difficulty across the board, admitting positions the student previously could not absorb; conversely, filtering the hardest-to-absorb tokens lowers the aggregate difficulty, admitting a PI level that would otherwise have overwhelmed the student.
\emph{Optimizing either variable under a fixed setting of the other reaches only a conditional optimum, not the joint one.}

Motivated by the coupling between token selection and PI, we formulate OPSD as an optimization problem of matching supervision to the student's current learning capacity.
Each correction the teacher prescribes both opens a divergence, the amount the student stands to learn, and incurs a learning difficulty, the distance between the correction and what the student currently believes. A student can productively absorb only a limited amount of such difficulty at a given stage of training. We therefore jointly choose the token weights and the amount of privileged information to maximize the total divergence while keeping the aggregate learning difficulty within the student's capacity. We refer to the best divergence attainable at each capacity level as the \emph{Learnability Frontier}.
Token-selection and PI-control methods optimize this trade-off along only one dimension, whereas our formulation searches for the capacity-matched point by adapting both dimensions together.

We solve this joint optimization problem by introducing a single capacity budget as a Lagrangian constraint.
The resulting dual variable coordinates both decisions in either direction: when the aggregate difficulty exceeds the budget, it tightens token selection and lowers PI; when the budget has slack, it relaxes selection and raises PI so the student is not under-supervised.
In this way, we propose \textbf{U}nified on-policy \textbf{S}elf-\textbf{D}istillation, called \textbf{USD}, a lightweight online algorithm that solves this Lagrangian during training.
It measures the aggregate difficulty the student is absorbing and uses it to 
update two scalars, the dual variable $\lambda$ and the PI strength $\beta$, from which the token weights follow in closed form.
Empirically, USD consistently outperforms vanilla OPSD across model scales and datasets. Ablations show that each axis helps on its own and that the two are complementary (Figure~\ref{fig:teaser}). Training dynamics further show that the two decisions co-evolve to keep the aggregate learning difficulty near the target budget.

Overall, our contributions can be summarized as follows:
\begin{enumerate}
\item We identify the coupling between token-level supervision and teacher-level privileged information in OPSD: optimizing either one under a fixed setting of the other yields only a suboptimal solution (\S\ref{sec:supervision}).
\item We formulate capacity-matched OPSD as a joint optimization of token weights and PI level, and introduce the \emph{Learnability Frontier} to characterize the best divergence attainable under a bounded aggregate difficulty (\S\ref{sec:joint}).
\item We propose USD, a lightweight online algorithm that coordinates soft token weighting and PI adaptation through a single dual variable, raising the mean accuracy from 56.4 to 58.7 over vanilla OPSD on mathematical reasoning benchmarks (\S\ref{sec:algorithm} and \S\ref{sec:experiments}).
\end{enumerate}

\section{Related Work}
\label{sec:related}

\paragraph{On-Policy Self-Distillation.}
On-policy distillation (OPD) trains a student on its own rollouts under dense supervision from a teacher~\citep{agarwal2024policy,hou2026uni}, and has become a widely adopted post-training paradigm for LLM reasoning~\citep{song2026survey,wei2022chain}. A prominent line within OPD is \emph{self-distillation}, which repurposes a single base model as both teacher and student under different context conditions~\citep{shenfeld2026self,hubotter2026reinforcement,zhao2026self,penaloza2026privileged,yang2026self,stein2026gates,kim2026opsd, ke2026respecting,li2026demopsd}. However, recent work identifies a persistent mismatch between the teacher's supervision and the student's current learning capacity~\citep{kim2026does,wang2026denser,li2026rethinking,zhang2026full}, which our joint formulation is designed to address.

\paragraph{Token Selection for On-Policy Distillation.}
Vanilla OPD treats every token equally, but recent studies observe two limitations of this uniform treatment: task-bearing supervision is concentrated in a small fraction of positions~\citep{xu2026tip,wang2026not}, while the remaining bulk of the loss is spent on stylistic or format-driven tokens that can even induce privilege-induced style drift~\citep{pan2026rlcsd,yang2026ogls}. To sharpen the signal, a growing body of work reweights or gates the token-level loss according to signals such as student entropy, teacher--student divergence, sequence position, local teachability, and trust-region support~\citep{xu2026tip,wang2026not,liu2026teacher,xie2026position,liu2026prefix,li2026filter,jin2026entropy,hao2026rethinking,xing2026trust,yang2026prune,zhang2026tailoring,ziheng2026less,wang2026trace}. Across these proposals, PI is treated as a fixed input and token weights are optimized conditional on the teacher signal; our framework instead adapts token weighting and PI strength jointly under a shared learning-capacity budget.

\paragraph{Privileged Information Control in On-Policy Distillation.}
A parallel line of work observes that full privileged exposure creates too large a teacher--student divergence for the student to absorb in one step and can leak answer-dependent shortcuts into the supervision signal, and reduces the effective PI reach through mechanisms such as adaptive solution masking, learned exposure schedules, anchored residual application, and reference-signal subtraction~\citep{tan2026paint,han2026adaptive,zhang2026beyond,shen2026purified}. These OPD-specific instantiations echo classical exposure-control ideas from scheduled sampling~\citep{bengio2015scheduled} and DAgger~\citep{ross2011reduction}. 
Most of these schedules adjust PI without measuring its effect: the exposure level follows a predetermined rule or a heuristic signal, rather than the difficulty the student is currently absorbing.
Across these proposals, PI is adapted while token-level supervision is applied uniformly across positions; our framework instead adapts token weighting and PI strength jointly under a shared learning-capacity budget.

\section{Methodology}
\label{sec:method}

\begin{figure}[t]
\centering
\includegraphics[width=0.85\columnwidth]{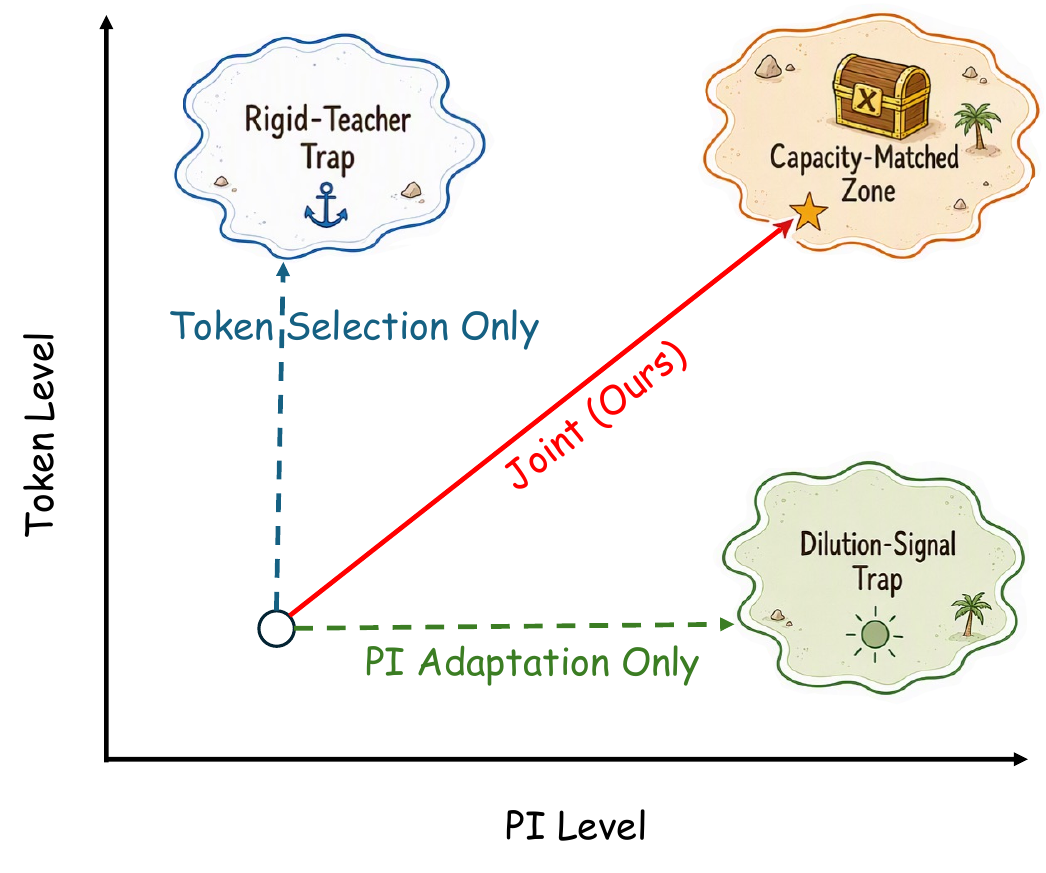}
\caption{Two design axes of OPSD. Existing methods adapt either token weighting or PI exposure alone, whereas USD jointly adapts both axes.}
\label{fig:axes}
\vspace{-1em}
\end{figure}

\subsection{Two Design Axes of OPSD}
\label{sec:supervision}

In on-policy self-distillation, the student $p_S$ generates a rollout $\hat{y}$ conditioned on the problem $x$, while the teacher $p_T^\beta$---the same base model conditioned on privileged information---provides token-level supervision. The scalar $\beta \in [0,1]$ controls the PI exposure, from none at $\beta=0$ to full at $\beta=1$. We further introduce per-token \emph{weights} $w_t \in [0,1]$, giving the general weighted objective
\begin{equation}
J(\mathbf{w}, \beta) = \frac{1}{\sum_t w_t}\sum_{t=1}^{|\hat{y}|} w_t \cdot D_{\text{KL}}\!\left(p_T^\beta(\cdot \mid x, \hat{y}_{<t}) \,\|\, p_S(\cdot \mid x, \hat{y}_{<t})\right).
\label{eq:opsd}
\end{equation}
Vanilla OPSD~\citep{zhao2026self} corresponds to $\beta = 1$ and a uniform weights of $w_t \equiv 1$.

These two axes control complementary aspects of privileged supervision.
As illustrated in Figure~\ref{fig:axes}, they define a two-dimensional design space for OPSD.
Existing methods typically optimize only one axis at a time:
token-selection approaches move vertically by reallocating supervision under a fixed teacher,
whereas PI-adaptation approaches move horizontally by adjusting the teacher while uniformly supervising all tokens.
Our method instead jointly adapts both axes, allowing supervision to better match the student's current learning capacity.
The PI-strength axis, governed by $\beta$, changes the teacher distribution itself: increasing $\beta$ exposes more privileged information and typically produces a sharper, more corrective teacher. The token-weighting axis, governed by $\mathbf{w}$, instead controls where and to what extent this signal is applied along the student rollout. Thus, $\beta$ determines \emph{what supervision is generated}, whereas $\mathbf{w}$ determines \emph{which parts of that supervision are transmitted}.
\paragraph{Interpretation.}
The two control axes play fundamentally different roles. Adjusting the PI strength changes the teacher itself, affecting every token simultaneously by reshaping the privileged distribution. Adjusting token weights, in contrast, does not alter the teacher, but selectively allocates the resulting supervision across the student rollout. One axis therefore controls the quality of supervision, while the other controls its allocation. Since both ultimately determine how much useful supervision the student receives, they should be optimized jointly rather than independently.

To reason about how the axes should be set, we characterize the per-token teacher signal by two quantities. The \emph{divergence}
\[
g_t(\beta) \;=\; D_{\text{KL}}\!\left(p_T^\beta \,\|\, p_S\right)_t
\]
measures how far the teacher's distribution lies from the student's at position $t$---the potential update magnitude at that token. Let $a_t^\star = \arg\max_a p_T^\beta(a \mid x, \hat{y}_{<t})$ denote the teacher's top prediction. The \emph{learning difficulty}
\[
d_t(\beta) \;=\; \left(\log \frac{p_T^\beta(a_t^\star)}{p_S(a_t^\star)}\right)_{\!+}
\]
measures how far the teacher's top prediction lies above the student's current belief---the effort required to absorb the correction; we take the positive part since a token the student already prefers ($p_S(a_t^\star) \ge p_T^\beta(a_t^\star)$) poses no learning burden. Intuitively, $d_t$ is the student's surprise at the teacher's preferred token: when the student already assigns it high probability the correction is a small step, whereas when the teacher's choice lies far outside the student's expectation $d_t$ is large and a single update cannot close the gap. Both quantities depend on $\beta$: strengthening PI sharpens the teacher, typically raising both the correction it prescribes and the effort to absorb it. A useful signal therefore requires balancing the two: the teacher strong enough to offer substantial correction, yet not so strong that its recommendations exceed the student's capacity to absorb.

The two axes, moreover, are coupled through these quantities. Reducing $\beta$ shrinks per-token difficulty across the board and makes previously unabsorbable tokens learnable, so the set of tokens worth including shifts; conversely, filtering high-difficulty tokens lowers the aggregate load and admits a higher $\beta$. Optimizing either axis under a fixed setting of the other therefore yields only a conditional solution.

\subsection{The Unified Optimization Framework}
\label{sec:joint}

\begin{figure}[t]
\centering
\includegraphics[width=\columnwidth]{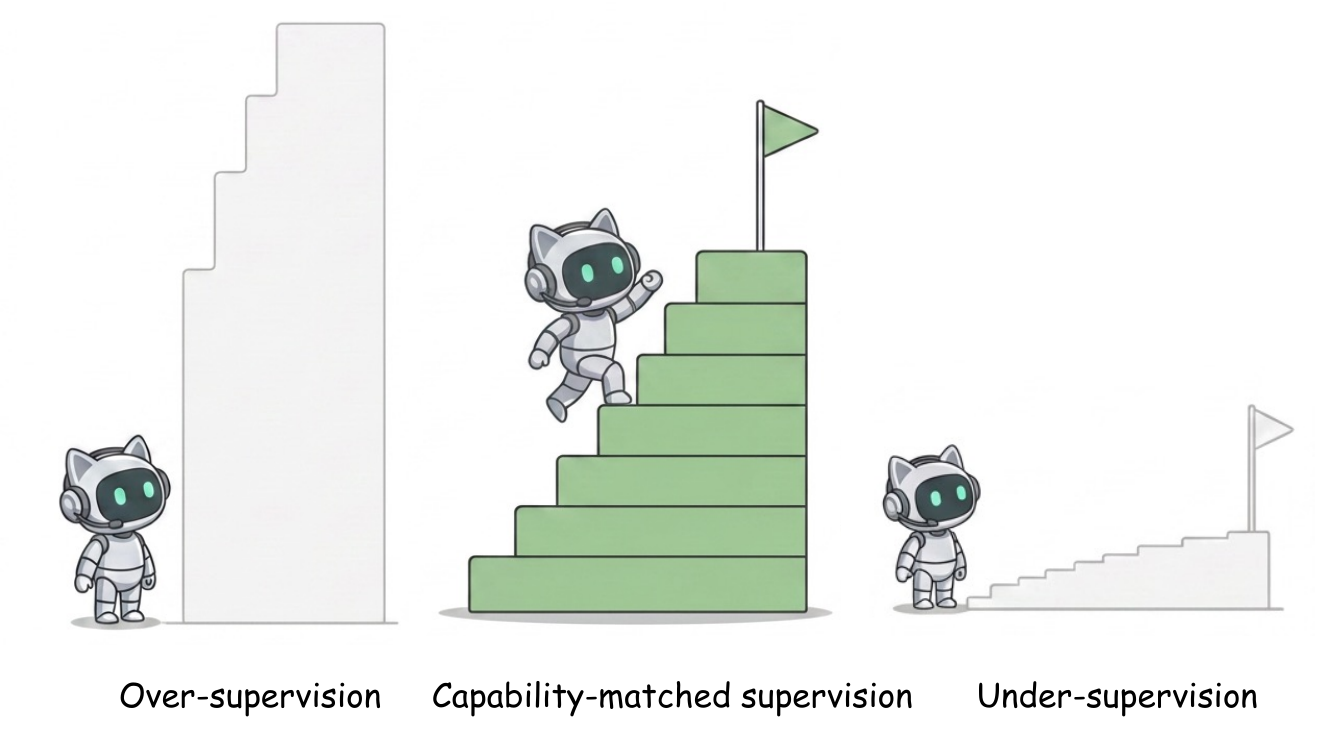}
\caption{Over-, matched, and under-supervision.}
\label{fig:capacity_matching}
\vspace{-1em}
\end{figure}

As illustrated in Figure~\ref{fig:capacity_matching}, effective supervision should match the student's current learning capacity. When supervision exceeds the student's capacity, the resulting corrections are too difficult to absorb; when it falls far below that capacity, the learning signal is too weak to support substantial improvement.
We formalize the joint choice of the two axes as a constrained optimization: maximize the mean weighted divergence the supervision delivers, subject to a budget on the mean weighted difficulty the student must absorb,
\begin{equation}
\max_{\mathbf{w} \in [0,1]^T,\; \beta \in [0,1]} \frac{1}{N}\sum_t w_t \, g_t(\beta) \quad \text{s.t.} \quad \frac{1}{N}\sum_t w_t \, d_t(\beta) \leq \varepsilon,
\label{eq:constrained}
\end{equation}
where $\varepsilon > 0$ is the student's per-batch \emph{capacity budget}---the mean weighted learning difficulty the student can absorb at the current stage. Selecting more tokens and strengthening PI both raise this quantity and consume the same budget, so the two decisions compete for the same resource.

Sweeping the budget traces out the \emph{Learnability Frontier} $\mathcal{F}(\varepsilon)$, the maximum mean weighted divergence attainable at each capacity level:
\begin{equation}
\mathcal{F}(\varepsilon) \;=\; \max_{\mathbf{w}, \beta}\left\{\,\frac{1}{N}\sum_t w_t \, g_t(\beta) \;:\; \frac{1}{N}\sum_t w_t \, d_t(\beta) \leq \varepsilon\,\right\}.
\label{eq:frontier}
\end{equation}
The frontier is the Pareto boundary of the divergence--difficulty trade-off. Token-selection methods (fixing $\beta$) and PI-control methods (fixing $\mathbf{w}$) each restrict the search to a one-dimensional slice of this landscape and reach the frontier only by coincidence; our method optimizes both axes jointly to move along it.

\subsection{Theoretical Analysis}
\label{sec:structure}
The unified optimization problem admits a simple interpretation. Since both token weighting and PI strength consume the same learning-capacity budget, they compete for a shared resource rather than acting independently. The Lagrangian therefore introduces a single dual variable that quantifies the marginal value of learning difficulty and naturally couples the two decisions.

Solving Eq.~\eqref{eq:constrained} reveals that the two supervision axes are governed by this single quantity---the marginal price of learning difficulty. Applying the Lagrangian~\citep{boyd2004convex} to Eq.~\eqref{eq:constrained} with dual variable $\lambda \ge 0$ yields
\begin{equation}
\mathcal{L}(\mathbf{w}, \beta, \lambda) \;=\; \frac{1}{N}\sum_t w_t \underbrace{[g_t(\beta) - \lambda\, d_t(\beta)]}_{v_t(\lambda, \beta)} + \lambda\varepsilon,
\label{eq:lagrangian}
\end{equation}
where each token contributes a \emph{net value} $v_t = g_t - \lambda d_t$: its divergence minus $\lambda$-priced difficulty. Since $\mathcal{L}$ is linear in $w_t$ with box constraints, the optimal weights collapse to a hard threshold,
\begin{equation}
w_t^\star \;=\; \mathbf{1}\!\left[g_t(\beta) > \lambda \, d_t(\beta)\right].
\label{eq:hard_weights}
\end{equation}
A token is thus included exactly when its divergence exceeds its $\lambda$-priced difficulty, i.e.\ when its net value $v_t$ is positive; token selection reduces to a per-token $0/1$ decision by the sign of $v_t$, with $\lambda$ setting the price at which divergence is traded against difficulty.

The dual variable also fixes the optimal PI strength. When the budget binds ($\lambda > 0$) at an interior optimum $\beta^\star \in (0,1)$, the stationarity condition $\partial\mathcal{L}/\partial\beta = 0$ over the selected tokens gives
\begin{equation}
\lambda \;=\; \frac{\sum_{t:\,w_t^\star=1} \partial g_t/\partial \beta}{\sum_{t:\,w_t^\star=1} \partial d_t/\partial \beta}.
\label{eq:beta_stationarity}
\end{equation}
This condition reads as a marginal balance: at the optimal PI strength, the extra divergence that a small increase in $\beta$ contributes, summed over the selected tokens, equals $\lambda$ times the extra difficulty it incurs. The same $\lambda$ that prices difficulty in the token decision reappears here as the exchange rate between divergence gained and difficulty added, and $\beta$ is raised until one more unit of difficulty no longer buys $\lambda$ units of divergence.

\begin{tcolorbox}[
  colback=gray!5,
  colframe=gray!55!black,
  title=\textbf{Takeaway},
  fonttitle=\bfseries\small,
  boxsep=3pt,
  left=6pt, right=6pt, top=4pt, bottom=4pt,
  arc=2pt,
]
\small
The two supervision axes---token weighting and PI strength---are coupled through a single \emph{capacity budget}: both decisions consume the same aggregate learning difficulty. Solving the constrained problem shows that one dual variable $\lambda$, the marginal price of difficulty, simultaneously sets the token-selection threshold and the direction of PI adjustment---not by design, but as a structural consequence of the shared budget.
\end{tcolorbox}

Condition~\ref{eq:beta_stationarity} cannot be evaluated directly, because $\beta$ acts through non-differentiable operations on the teacher's context. A practical surrogate is to move $\beta$ along the primal residual $\varepsilon - \mathrm{load}(\beta)$, where $\mathrm{load}(\beta) = \frac1N\sum_t w_t d_t(\beta)$ is the current mean weighted difficulty. Under a single monotonicity assumption, this rule reaches the budget-binding optimum without ever forming a gradient in $\beta$.

\begin{assumption}[Monotonicity in $\beta$]
\label{asm:beta}
Over the selected tokens, both the mean weighted difficulty $\mathrm{load}(\beta) = \frac{1}{N}\sum_t w_t\, d_t(\beta)$ and the mean weighted divergence $\frac{1}{N}\sum_t w_t\, g_t(\beta)$ are strictly increasing in $\beta$.
\end{assumption}

Both parts express the same mechanism: revealing more privileged information sharpens the teacher, raising both the correction it prescribes and the difficulty of absorbing it. Figure~\ref{fig:monotonicity} verifies both monotonicities directly on Qwen3-1.7B training rollouts, with diminishing marginal returns as the reference solution is progressively revealed.

\begin{figure}[t]
\centering
\includegraphics[width=\columnwidth]{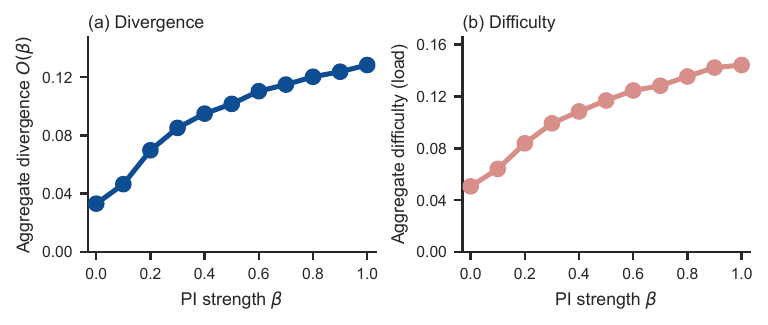}
\caption{Empirical verification of Assumption~\ref{asm:beta} on real Qwen3-1.7B training rollouts. Both (a) the mean weighted divergence $\frac{1}{N}\sum_t w_t g_t(\beta)$ and (b) the mean weighted difficulty $\mathrm{load}(\beta)$ increase strictly and monotonically with the PI strength $\beta$.}
\label{fig:monotonicity}
\end{figure}

\begin{proposition}[The residual update targets the optimal PI strength]
\label{prop:direction}
Let $\beta$ be constrained to $[\beta_{\min}, 1]$ with $\beta_{\min} > 0$. Under
Assumption~\ref{asm:beta}, suppose there exists
$\beta^\star \in (\beta_{\min}, 1)$ with $\mathrm{load}(\beta^\star) = \varepsilon$.
Then $\beta^\star$ maximizes $\frac{1}{N}\sum_t w_t g_t(\beta)$ subject to
$\mathrm{load}(\beta) \le \varepsilon$, its multiplier satisfies
Eq.~\eqref{eq:beta_stationarity}, and the residual update
$\beta \leftarrow \beta + \eta_\beta(\varepsilon - \mathrm{load}(\beta))$
converges monotonically to $\beta^\star$ from either side.
\end{proposition}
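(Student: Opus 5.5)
The argument has three parts, matching the three claims of Proposition~\ref{prop:direction}. The weights $w_t$ are held fixed at the current selection, so $G(\beta) := \frac1N\sum_t w_t g_t(\beta)$ and $\mathrm{load}(\beta)$ are scalar functions on $[\beta_{\min},1]$. Assumption~\ref{asm:beta} makes both strictly increasing. I will also assume they are continuous, and differentiable at $\beta^\star$ for the multiplier statement, since Eq.~\eqref{eq:beta_stationarity} already presupposes this.

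\emph{Optimality.} Strict monotonicity of $\mathrm{load}$ means the feasible set $\{\beta \in [\beta_{\min},1] : \mathrm{load}(\beta)\le\varepsilon\}$ is exactly the interval $[\beta_{\min},\beta^\star]$: every $\beta>\beta^\star$ has $\mathrm{load}(\beta)>\mathrm{load}(\beta^\star)=\varepsilon$. Since $G$ is strictly increasing, its maximum over this interval is attained uniquely at the right endpoint $\beta^\star$. This also shows the constraint binds.

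\emph{Multiplier.} Consider the one-dimensional Lagrangian $G(\beta)-\lambda(\mathrm{load}(\beta)-\varepsilon)$. At the interior binding optimum, KKT stationarity requires $G'(\beta^\star)=\lambda\,\mathrm{load}'(\beta^\star)$. I then set
\[
\lambda = \frac{G'(\beta^\star)}{\mathrm{load}'(\beta^\star)} \ge 0 .
\]
Nonnegativity holds because both derivatives are nonnegative by monotonicity. The ratio is well defined if $\mathrm{load}'(\beta^\star)>0$, which I add as a mild regularity condition. Strict monotonicity alone does not rule out a zero derivative at a single point. Expanding the sums over the selected tokens, where $w_t=1$, gives exactly Eq.~\eqref{eq:beta_stationarity}.

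\emph{Convergence of the residual update.} Write $r(\beta)=\varepsilon-\mathrm{load}(\beta)$. By monotonicity, $r>0$ for $\beta<\beta^\star$ and $r<0$ for $\beta>\beta^\star$. The update therefore always moves toward $\beta^\star$.

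To show it does not overshoot, I assume $\mathrm{load}$ is $L$-Lipschitz near $\beta^\star$ and take $\eta_\beta\le 1/L$. Starting from $\beta_k<\beta^\star$,
\[
\beta_{k+1}-\beta^\star = (\beta_k-\beta^\star) + \eta_\beta\bigl(\mathrm{load}(\beta^\star)-\mathrm{load}(\beta_k)\bigr) \le (\beta_k-\beta^\star)(1-\eta_\beta L) \le 0 ,
\]
and $\beta_{k+1} > \beta_k$ because the residual is positive. The symmetric argument applies from above. Projecting onto $[\beta_{\min},1]$ does not interfere, since $\beta^\star$ lies in the interior and the iterates stay between the start point and $\beta^\star$.

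The iterates are thus monotone and bounded, so they converge to some limit $\bar\beta$. By continuity, $r(\bar\beta)=0$, and strict monotonicity forces $\bar\beta=\beta^\star$.

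\emph{Main obstacle.} The main difficulty is this no-overshoot step. The proposition as stated gives no step-size condition, so the proof has to introduce the Lipschitz bound and the restriction $\eta_\beta\le 1/L$ explicitly. The key inequality is $\mathrm{load}(\beta^\star)-\mathrm{load}(\beta_k)\le L(\beta^\star-\beta_k)$.
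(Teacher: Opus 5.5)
Your first two parts follow the paper's proof essentially step for step. Strict monotonicity of $\mathrm{load}$ makes the feasible set $[\beta_{\min},\beta^\star]$, monotonicity of $G$ puts the maximizer at the binding endpoint, and stationarity there yields the ratio in Eq.~\eqref{eq:beta_stationarity}.

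You depart genuinely on the third claim. The paper's Step~3 stops at the sign observation: $\varepsilon-\mathrm{load}(\beta)$ has the sign of $\beta^\star-\beta$, so the update points toward $\beta^\star$ and has it as its unique fixed point. That does not by itself give convergence, let alone monotone convergence from one side. For $\mathrm{load}(\beta)=\varepsilon+s(\beta-\beta^\star)$ the iteration is $\beta_{k+1}-\beta^\star=(1-\eta_\beta s)(\beta_k-\beta^\star)$. It alternates sides when $1<\eta_\beta s<2$ and diverges when $\eta_\beta s>2$. The stated claim is therefore false without a step-size condition. Your Lipschitz bound with $\eta_\beta\le 1/L$, together with the monotone-bounded-sequence argument, is exactly what makes it true. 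Your version buys an actual proof at the price of a hypothesis the statement omits; the paper's version buys brevity but proves only directional correctness.

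Likewise, your regularity condition $\mathrm{load}'(\beta^\star)>0$ and your conclusion $\lambda\ge 0$ are more careful than the paper's unqualified $\lambda^\star>0$. Strict monotonicity alone does not deliver that: a cubic inflection of $G$ or $\mathrm{load}$ at $\beta^\star$ breaks it. Incidentally, once you know the iterates are monotone you do not even need continuity to identify the limit. If $\bar\beta\neq\beta^\star$, monotonicity of $\mathrm{load}$ keeps the steps bounded away from zero, which contradicts convergence.

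One wording fix: you apply the Lipschitz bound at $\beta_k$, which may be far from $\beta^\star$. So you need $|\mathrm{load}(\beta)-\mathrm{load}(\beta^\star)|\le L|\beta-\beta^\star|$ on the whole segment between the initial point and $\beta^\star$ (for instance, on all of $[\beta_{\min},1]$), not merely near $\beta^\star$.
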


\begin{proof}
Write $\ell(\beta) = \mathrm{load}(\beta)$ and $G(\beta) = \frac{1}{N}\sum_t w_t g_t(\beta)$,
both strictly increasing by Assumption~\ref{asm:beta}.

\emph{Step 1 (feasible set).} $\ell$ is injective, so $\beta^\star$ is the unique
root of $\ell(\beta) = \varepsilon$, and $\ell(\beta) \le \varepsilon \iff \beta \le \beta^\star$.
The feasible set is therefore $[\beta_{\min}, \beta^\star]$.

\emph{Step 2 (optimality).} $G$ is strictly increasing on this interval, so it is
maximized at $\beta = \beta^\star$, where the constraint is active. Stationarity at
this endpoint gives $G'(\beta^\star) = \lambda^\star \ell'(\beta^\star)$ with
$\lambda^\star > 0$, that is,
\[
\lambda^\star \;=\; \frac{\sum_t w_t\,\partial g_t/\partial\beta}
{\sum_t w_t\,\partial d_t/\partial\beta},
\]
recovering Eq.~\eqref{eq:beta_stationarity}.

\emph{Step 3 (update).} Since $\ell$ is increasing, $\varepsilon - \ell(\beta)$ has
the sign of $\beta^\star - \beta$. The update therefore increases $\beta$ when
$\beta < \beta^\star$, decreases it when $\beta > \beta^\star$, and is stationary
only at $\beta^\star$.
\end{proof}

Proposition~\ref{prop:direction} closes the gap between Eq.~\eqref{eq:beta_stationarity} and what can actually be computed during training: the primal residual drives $\beta$ to the very point that Eq.~\eqref{eq:beta_stationarity} characterizes, without ever forming a gradient in $\beta$. The residual therefore serves as a gradient-free surrogate for the optimal PI strength.

\subsection{USD: A Primal-Dual Online Algorithm}
\label{sec:algorithm}
The token weights, however, are not yet implementable. The optimal rule in 
Eq.~\eqref{eq:hard_weights} presumes the correct multiplier $\lambda$, which is a function of the current student and thus non-stationary over training; moreover, its $0/1$ indicator is not differentiable and cannot be incorporated into a gradient-based objective. Both $\lambda$ and $\mathbf{w}$ must therefore be estimated online, jointly with $\beta$.

We therefore propose \textbf{USD}, \textbf{U}nified on-policy \textbf{S}elf-\textbf{D}istillation, an online primal--dual algorithm that instantiates the
preceding optimization framework over the training stream. USD jointly adapts the token
weights $\mathbf{w}$, the dual variable $\lambda$, and the PI strength $\beta$, allowing the
supervision policy to continuously track the student's changing learning capacity.

\begin{table*}[h]
\centering
\caption{Math reasoning performance Avg@12 (\%) on Qwen3 backbones.  \textbf{Avg}: mean over the nine (model, benchmark) cells. \textbf{Bold}: best per column within each scale.}
\label{tab:main}
\setlength{\tabcolsep}{3pt}
\begin{tabular}{l ccc c ccc c ccc c c}
\toprule
& \multicolumn{3}{c}{\textbf{Qwen3-1.7B}} && \multicolumn{3}{c}{\textbf{Qwen3-4B}} && \multicolumn{3}{c}{\textbf{Qwen3-8B}} && \\
\cmidrule{2-4} \cmidrule{6-8} \cmidrule{10-12}
\textbf{Method} & \textbf{AIME24} & \textbf{AIME25} & \textbf{HMMT25} && \textbf{AIME24} & \textbf{AIME25} & \textbf{HMMT25} && \textbf{AIME24} & \textbf{AIME25} & \textbf{HMMT25} && \textbf{Avg} \\
\midrule
SFT           & 48.4          & 36.9          & 20.9          && 71.6          & 62.8          & 41.5          && 74.4          & 68.1          & 42.8          && 51.9 \\
GRPO          & 51.9          & 38.9          & 26.9          && 75.6          & 66.9          & 45.0          && 78.1          & 70.3          & \textbf{49.7} && 55.9 \\
OPSD          & 54.4          & 41.9          & 28.3          && 75.3          & 67.2          & 44.1          && 77.8          & 71.9          & 46.9          && 56.4 \\
TIP           & 55.8          & 41.9          & 27.2          && 75.8          & 69.7          & 45.0          && 76.9          & 72.8          & 49.2          && 57.1 \\
PAINT         & 56.4          & 42.8          & 29.2          && 75.6          & 66.7          & 44.2          && 78.3          & 72.5          & 47.2          && 57.0 \\
\textbf{Ours} & \textbf{58.9} & \textbf{43.6} & \textbf{29.4} && \textbf{76.7} & \textbf{70.0} & \textbf{46.7} && \textbf{79.7} & \textbf{73.3} & \textbf{49.7} && \textbf{58.7} \\
\bottomrule
\end{tabular}
\end{table*}

\paragraph{Update rules.}
The primal step on the student parameters requires differentiable weights, so we replace the hard threshold in Eq.~\ref{eq:hard_weights} by adding an entropy regularizer $\tau H(\mathbf{w})$ to the Lagrangian, yielding
\begin{equation}
w_t \;=\; \sigma\!\left(\frac{g_t(\beta) - \lambda\, d_t(\beta)}{\tau}\right),
\label{eq:soft_weights}
\end{equation}
where $\sigma$ is the sigmoid and $\tau > 0$ controls sharpness. Given these weights, we update the multiplier by projected subgradient ascent on the dual,
\begin{equation}
\lambda \;\leftarrow\; \max\!\bigl(0,\; \lambda + \eta_\lambda\, (\mathrm{load} - \varepsilon)\bigr),
\label{eq:dual_update}
\end{equation}
where $\mathrm{load} = \frac{1}{N}\sum_t w_t\, d_t(\beta)$ is the current mean weighted difficulty; when it exceeds the budget, $\lambda$ rises and tightens token selection through Eq.~\eqref{eq:soft_weights}. Finally, we step $\beta$ along the primal residual,
\begin{equation}
\beta \;\leftarrow\; \mathrm{clip}\!\bigl(\beta + \eta_\beta\,(\varepsilon - \mathrm{load}),\; \beta_{\min},\; 1\bigr),
\label{eq:beta_update}
\end{equation}
which by Proposition~\ref{prop:direction} drives $\beta$ to the point where $\mathrm{load}(\beta) = \varepsilon$. The dual update acts on the load in the same direction, since a larger $\lambda$ down-weights the highest-difficulty tokens. The two controls therefore act together on the load, raising it while the budget has slack and reducing it once the budget is exceeded, so the load converges to $\varepsilon$.

\paragraph{Training loss and algorithm.}
We normalize the training loss by total weight,
\begin{equation}
\mathcal{L}_{\text{train}} \;=\; \frac{\sum_t w_t\, D_{\text{KL}}(p_T^\beta\|p_S)_t}{\sum_t w_t},
\label{eq:loss}
\end{equation}
so token selection only shifts the relative importance of tokens, not the effective step size. Algorithm~\ref{alg:method} summarizes the full procedure. Beyond standard OPSD, the only per-batch overhead is one $O(T)$ pass to compute $\mathrm{load}$ and two scalar updates---no auxiliary networks or extra forward passes.

\begin{algorithm}[t]
\caption{The Algorithm of USD}
\label{alg:method}
\begin{algorithmic}[1]
\STATE \textbf{Input:} capacity $\varepsilon$, temperature $\tau$, step sizes $\eta_\lambda, \eta_\beta$
\STATE \textbf{Initialize:} $\lambda \leftarrow 0$,\; $\beta \leftarrow \beta_{\text{init}}$
\FOR{each training batch $(x, y^\star)$}
    \STATE Student rollout: $\hat{y} \sim p_S(\cdot \mid x)$
    \STATE Teacher scoring: $p_T^\beta(\cdot \mid x, \hat{y}_{<t})$ at PI strength $\beta$
    \STATE Per-token divergence: $g_t \leftarrow D_{\text{KL}}(p_T^\beta\|p_S)_t$
    \STATE Per-token difficulty: $d_t \leftarrow \left(\log \frac{p_T^\beta(a_t^\star)}{p_S(a_t^\star)}\right)_{+}$
    \STATE Soft weights: $w_t \leftarrow \sigma\bigl((g_t - \lambda\, d_t)/\tau\bigr)$
    \STATE Loss: $\mathcal{L} \leftarrow \sum_t w_t\, g_t\, /\, \sum_t w_t$
    \STATE Student update: gradient step on $\mathcal{L}$
    \STATE $\mathrm{load} \leftarrow \tfrac{1}{N}\sum_t w_t\, d_t$
    \STATE Dual ascent: $\lambda \leftarrow \max(0,\, \lambda + \eta_\lambda(\mathrm{load} - \varepsilon))$
    \STATE PI update: $\beta \leftarrow \mathrm{clip}(\beta + \eta_\beta(\varepsilon - \mathrm{load}),\, \beta_{\min},\, 1)$
\ENDFOR
\end{algorithmic}
\end{algorithm}

\section{Experiments}
\label{sec:experiments}

\subsection{Experimental Setup}

\paragraph{Models and Benchmarks.}
We use the Qwen3 family~\citep{yang2025qwen3} at three scales: 1.7B, 4B, and 8B, all in non-thinking mode. We utilize the mathematical reasoning subset of OpenThoughts~\citep{guha2025openthoughts} as training data, sampling up to 30K chain-of-thought augmented problem-solution pairs. Evaluation covers AIME 2024, AIME 2025, and HMMT 2025; we report Avg@12 on the best checkpoint per benchmark.

\paragraph{Baselines.}
We compare against SFT (imitates reference traces), GRPO~\citep{guo2025deepseek} (RL with verifiable rewards), vanilla OPSD~\citep{zhao2026self} (full-solution privileged teacher with uniform token weights), and two contemporaneous OPSD-style methods: TIP~\citep{xu2026tip}, which keeps the top-$\rho$ fraction of tokens by a Soft-OR entropy--divergence score, and PAINT~\citep{tan2026paint}, which combines overlap-adaptive suffix masking of the reference with sparse teacher--student logit interpolation. We reproduce TIP using the authors' open-source code with $\rho{=}0.2$, and PAINT following the paper's default settings. All baselines share our base models and training data; all methods train for 300 steps except GRPO, which we run for 500 steps and report at its peak checkpoint.

\paragraph{Implementation details.}
\textbf{(i) Training.} Following vanilla OPSD~\citep{zhao2026self}, the teacher is the frozen base model at the initial checkpoint; only the student's LoRA adapters~\citep{hu2022lora} (rank 64, $\alpha{=}128$) are updated with AdamW at learning rate $5{\times}10^{-6}$, effective batch size 32, gradient norm clip 0.1, for 300 steps.
\textbf{(ii) Hyperparameters.} PI strength $\beta \in [0,1]$ is instantiated as the fraction of reference tokens revealed to the teacher: at $\beta$, the teacher observes the first $\lfloor \beta L \rfloor$ characters of the reference (with $L$ its length in tokens), always retaining the final boxed answer. USD uses $\tau{=}0.1$, $\varepsilon{=}0.3$, $\eta_\lambda{=}0.1$, $\beta_{\text{init}}{=}0.8$, $\eta_\beta{=}0.03$, $\beta_{\min}{=}0.1$, fixed across all model scales, with a per-token KL clip of $0.05$ for numerical stabilization.
\textbf{(iii) Sampling.} Student rollouts are sampled at temperature $1.1$, top-$p$ $0.95$, top-$k$ $20$, and maximum length $1024$; at evaluation we sample 12 solutions per problem at temperature $1.0$ and report the best per benchmark over checkpoints saved every 50 steps.

\subsection{Main Results}
\begin{table*}[h]
\centering
\caption{Component ablation on Qwen3 backbones (Avg@12, \%). ``+ Token selection'': soft weights with fixed $\beta{=}1$; ``+ PI adaptation'': adaptive $\beta$ with uniform token weights. Within each scale, \textbf{Avg} is the mean over the three benchmarks and \textbf{bold} marks the best per column.}
\label{tab:ablation}
\setlength{\tabcolsep}{3pt}
\resizebox{\textwidth}{!}{%
\begin{tabular}{l cccc c cccc c cccc}
\toprule
& \multicolumn{4}{c}{\textbf{Qwen3-1.7B}} && \multicolumn{4}{c}{\textbf{Qwen3-4B}} && \multicolumn{4}{c}{\textbf{Qwen3-8B}} \\
\cmidrule{2-5} \cmidrule{7-10} \cmidrule{12-15}
\textbf{Variant} & \textbf{AIME24} & \textbf{AIME25} & \textbf{HMMT25} & \textbf{Avg}
                 && \textbf{AIME24} & \textbf{AIME25} & \textbf{HMMT25} & \textbf{Avg}
                 && \textbf{AIME24} & \textbf{AIME25} & \textbf{HMMT25} & \textbf{Avg} \\
\midrule
Vanilla OPSD         & 54.4          & 41.9          & 28.3          & 41.5          && 75.3          & 67.2          & 44.1          & 62.2          && 77.8          & 71.9          & 46.9          & 65.5          \\
\; + PI adaptation   & 58.3          & 40.8          & 28.1          & 42.4          && 75.3          & 69.4          & 44.2          & 63.0          && 78.1          & \textbf{74.2} & 48.6          & 67.0          \\
\; + Token selection & 58.6          & 42.5          & 28.9          & 43.3          && 75.0          & 67.8          & 45.0          & 62.6          && 78.6          & 73.6          & 47.8          & 66.7          \\
\; + Both (full)     & \textbf{58.9} & \textbf{43.6} & \textbf{29.4} & \textbf{44.0} && \textbf{76.7} & \textbf{70.0} & \textbf{46.7} & \textbf{64.5} && \textbf{79.7} & 73.3          & \textbf{49.7} & \textbf{67.6} \\
\bottomrule
\end{tabular}%
}
\end{table*}

Table~\ref{tab:main} compares all methods across three model scales and three benchmarks. Our method achieves the best Avg@12 on nearly all cells and raises the 9-cell overall mean from 56.4 (vanilla OPSD) to 58.7 (+2.3 points), with a consistent per-scale improvement of roughly two points at each of 1.7B, 4B, and 8B---the gain is not driven by any single benchmark or scale.

TIP and PAINT each improve over vanilla OPSD at most scales, confirming that token-level filtering and PI adaptation are individually beneficial. However, TIP operates at fixed $\beta{=}1$: it selects tokens under the strongest possible teacher, which means its selection boundary is tuned to a difficulty landscape that would shift under a softer teacher. PAINT adjusts PI strength but applies uniform token weighting, so the learning signal it optimizes includes many positions that contribute negligible divergence. Our method dominates both because the shared dual variable $\lambda$ simultaneously relaxes token selection (to include positions made feasible by a softer teacher) and adjusts PI strength (to exploit the capacity freed by filtering)---the complementary action that neither single-axis method can replicate.

Our method also outperforms GRPO at every scale (+4.7, +2.0, +1.5 on the per-scale average at 1.7B, 4B, and 8B), while using a far smaller sampling budget: a single $1024$-token rollout per problem over 300 steps, against 8 rollouts of up to 16k tokens over 500 steps for GRPO.

\begin{figure}[t]
\centering
\includegraphics[width=\columnwidth]{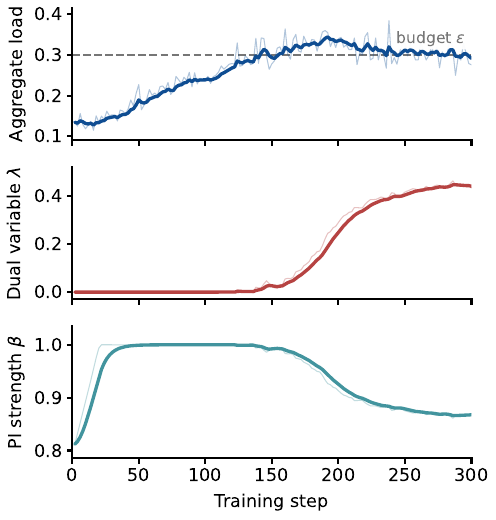}
\caption{Training dynamics of USD on Qwen3-1.7B.
Top: the aggregate load against the capacity budget $\varepsilon{=}0.3$ (dashed). Middle: the dual variable $\lambda$. Bottom: the PI strength $\beta$. Thin lines are per-step values, thick lines a running average. While the load is below budget, $\beta$ climbs to full exposure and $\lambda$ stays at zero; once the load reaches $\varepsilon$ , $\lambda$ activates and tightens token selection while $\beta$ eases back, and the load settles near $\varepsilon$.}
\label{fig:dynamics}
\end{figure}

\subsection{Ablation Study}

Table~\ref{tab:ablation} isolates each control axis on top of vanilla OPSD and yields three observations. \emph{First}, each axis is individually beneficial at every scale, confirming that both token selection and PI adaptation carry supervisory value in isolation. \emph{Second}, the joint variant attains the highest Avg@12 at every scale, with a gain of roughly two points over vanilla OPSD that is consistent across 1.7B, 4B, and 8B, whereas either axis alone yields a gain that varies with scale. The two axes are therefore complementary rather than 
redundant, since the shared capacity budget lets them substitute for one another without double-spending on the same tokens.\emph{Third}, no single axis dominates uniformly: token selection contributes more of the isolated gain at 1.7B, while PI adaptation edges ahead at 4B and 8B. Because the relative importance of the two axes shifts across scales, any fixed hand-picked weighting would be miscalibrated at one end of the range, whereas the coupled formulation---which derives both actions from a single capacity budget rather than a pre-set mixture---remains the top performer throughout.

\subsection{Empirical Analysis}

\paragraph{Co-evolution of $\lambda$ and $\beta$.} Figure~\ref{fig:dynamics} shows the training dynamics on Qwen3-1.7B. From its initial value, $\beta$ first rises: the teacher is still mild, the aggregate load sits below the budget, and the primal residual drives PI strength upward while $\lambda \approx 0$ leaves the weights nearly uniform. As $\beta$ increases, the teacher sharpens and per-token difficulty grows, pushing the load up to the budget. At that point $\lambda$ becomes active and starts filtering the highest-difficulty tokens, which in turn feeds back on $\beta$: the two controls thereafter move together, with $\lambda$ tightening selection while $\beta$ eases the teacher, keeping the aggregate load near the budget. Both eventually level off once the load stabilizes around $\varepsilon$.

\begin{table}[t]
\centering
\caption{Sensitivity to capacity budget $\varepsilon$ on Qwen3-1.7B (best Avg@12 across checkpoints, averaged over three benchmarks).}
\label{tab:sensitivity}
\begin{tabular}{lcccccc}
\toprule
$\varepsilon$ & 0.15 & 0.20 & \textbf{0.30} & 0.40 & 0.50 & 0.80 \\
\midrule
Avg & 42.0 & 41.5 & \textbf{44.0} & 40.8 & 41.5 & 41.0 \\
\bottomrule
\end{tabular}
\end{table}

\paragraph{Sensitivity to $\varepsilon$.}
Table~\ref{tab:sensitivity} reports the average accuracy as the capacity budget $\varepsilon$ varies. Performance peaks at $\varepsilon{=}0.3$ and degrades in both directions: an overly tight budget over-filters and starves the student of learnable signal, while an overly loose budget leaves the constraint rarely active, so the method reverts to near-uniform weighting. The peak at an intermediate value indicates a genuine trade-off: the gain comes from matching the budget to what the student can absorb, not from filtering as much as possible.

\section{Conclusion}
On-policy self-distillation has become an effective post-training paradigm for LLM reasoning, using only the model's own rollouts and a same-model privileged teacher. Existing extensions improve vanilla OPSD either by token re-weighting or by adapting privileged information, but optimize the two mechanisms independently. We show that they are coupled through a shared learning-capacity budget, yielding a unified optimization whose Lagrangian reduces both decisions to a single dual variable. Based on this formulation, we propose USD, a lightweight online algorithm that prices learning difficulty once and lets that price govern both axes. It consistently improves over vanilla OPSD, and ablations show that the two axes contribute complementary gains. USD establishes a capacity-centred view of OPSD, in which token weighting and privileged information are two expressions of one underlying budget. We hope this view provides a useful basis for future work on how much supervision a student should receive, and when.

\section{Acknowledgement}
We also sincerely thank Yi Liu and the Workbuddy \& CodeBuddy Team in Tencent (https://www.codebuddy.ai/) for their valuable assistance throughout this work.

\bibliography{main}

\end{document}